\documentclass[10pt,twocolumn,letterpaper]{article}

\usepackage{cvpr}              

\usepackage{multicol}
\usepackage{multirow}
\usepackage{wrapfig}
\usepackage{amsthm}
\newtheorem{proposition}{Proposition}[section]

\definecolor{cvprblue}{rgb}{0.21,0.49,0.74}
\usepackage[pagebackref,breaklinks,colorlinks,allcolors=cvprblue]{hyperref}

\def\paperID{***} 
\def\confName{CVPR}
\def\confYear{2026}

\title{Intervention-Aware Multiscale Representation Learning from Imaging Phenomics and Perturbation Transcriptomics}

\author{Jiayuan Chen\qquad Ruoqi Liu \qquad Zishan Gu \qquad Ping Zhang\thanks{Corresponding author.}\\
The Ohio State University\\
{\tt\small {\{chen.12930,liu.7324,gu.1090,zhang.10631\}@osu.edu}}
}

\begin{document}
\maketitle
\begin{abstract}

Microscopy-based phenotypic profiling is scalable for drug discovery but lacks the mechanistic depth of transcriptomics, which remains costly and scarce. Existing multimodal approaches either use images to support other modalities or naively align representations by sample identity, ignoring cell-type and dose variations in weakly paired data-limiting generalization to unseen interventions. In this paper, we introduce an intervention-aware distillation framework that leverages perturbational transcriptomics to guide image representation learning. A transcriptome-conditioned teacher integrates gene expression and intervention metadata to produce soft distributions over a chemistry-aware codebook organized by drug similarity. The teacher employs a fine-tuned single-cell foundation model to encode cell-type context and disentangle dose effects. An image-only student learns to predict these distributions from microscopy alone, distilling mechanistic knowledge while operating independently at test time. This design emphasizes intervention semantics rather than identity alignment and explicitly handles dose and cell-type mismatches.
We provide theoretical guarantees showing that transcriptomic guidance tightens the risk bound for image-based prediction. On Cell Painting and RxRx datasets paired with L1000, our method significantly improves one-shot transfer to unseen interventions and drug-target gene discovery compared to self-supervised and alignment baselines\footnote{https://github.com/The-Real-JerryChen/BioMicroscopyProfiler}.

\end{abstract}    
\section{Introduction}
Phenotype-based screening \cite{celik2022biological,rood2024toward} has become central to drug discovery \cite{hao2024large,moshkov2024learning} and to building virtual cells \cite{bunne2024build,zhang2025tahoe} that predict cellular responses to perturbations. Two complementary readouts dominate this landscape: microscopy imaging \cite{bray2016cell} provides scalable, low-cost morphological profiles suitable for screening thousands of compounds, while perturbational transcriptomics \cite{subramanian2017next,zhang2025tahoe} captures pathway-level mechanistic signatures that reveal how drugs modulate cellular programs. In current practice, image-based models rely primarily on self-supervised learning (SSL) methods such as masked autoencoders (MAE) \cite{kraus2024masked,he2022masked} or DINO \cite{caron2021emerging,kenyon2024vitally} to extract transferable visual features. While these approaches learn general-purpose representations, the resulting features remain weakly connected to the biological mechanisms and transcriptomic changes that drugs actually induce \cite{bendidi2025cross}. They excel at capturing morphological patterns but lack the mechanistic grounding that transcriptomics provides-the very information needed to understand why a phenotype emerges and to predict responses to novel interventions. This raises a fundamental question: when limited paired image-transcriptome data becomes available, how can we leverage perturbational transcriptomics to guide image representation learning toward mechanistic understanding?

Existing image-centric multimodal approaches \cite{replogle2022mapping} typically leverage transcriptomic or chemical information to improve compound representations. Some methods \cite{wang2023removing,liu2024learning,Chen_2025_ICCV} align both images and RNA-seq to a shared drug embedding space, while others \cite{sanchez2023cloome,zheng2024cross,fradkin2024molecules} directly pair drug structure with image features, bypassing RNA altogether. While these designs enable cross-modal retrieval, they fundamentally center on drug identity rather than intervention effects. By treating drug identity as the primary supervision signal, they collapse the effect variations introduced by dose and cell type into binary matching objectives-the same compound assayed under different conditions is treated as an equivalent positive pair, discarding the graded response information critical for generalization. Moreover, these approaches primarily aim to learn better drug \cite{sanchez2023cloome,liu2024learning} or transcriptomic representations \cite{bendidi2025cross}, with images serving mainly as auxiliary signals rather than the target modality for improvement. As a result, the learned image features remain disconnected from the mechanistic pathways that link drug chemistry to transcriptomic response and ultimately to morphological phenotype. Models trained this way struggle to transfer to unseen perturbations and cellular contexts where intervention effects differ from the training distribution \cite{kraus2025rxrx3,chandrasekaran2024three}.

We propose to reframe multimodal learning around intervention semantics rather than sample identity. Drug perturbations follow a structured causal pathway: chemistry determines which pathways are engaged, shaping transcriptomic responses that ultimately manifest in morphological phenotypes. To leverage this structure, we introduce an intervention-aware distillation framework in which perturbational transcriptomics guides image representation learning during training. A transcriptome-conditioned teacher model encodes intervention effects from RNA-seq and metadata, while an image-only student learns to recover these effects from microscopy alone. Two technical designs instantiate this principle. First, we fine-tune a single-cell foundation model (scFM) on perturbation response prediction to encode cell-type context from basal gene expression and intervention-specific information from perturbed transcriptomes and doses. This encoding provides mechanism-aware representations that capture pathway-level effects while accounting for dosage variations across weakly paired samples. Second, to organize supervision by mechanism rather than identity, we construct a chemistry-aware codebook by projecting drug molecular fingerprints into a normalized prototype space. The teacher integrates both RNA-seq and paired images to produce soft distributions over this codebook, while the student learns to predict these distributions from images alone. Transcriptomics thus serves as privileged information during training, shaping image features toward mechanistic understanding, but is not required at test time.
We also provide theoretical analysis showing that incorporating transcriptomic signals during training provably tightens the risk bound for image-based intervention prediction.

We validate our approach on multiple benchmarks spanning two encoder backbones. On Cell Painting~\cite{bray2017dataset,chandrasekaran2024three} and RxRx~\cite{kraus2025rxrx3} datasets paired with L1000 perturbational transcriptomics, we evaluate one-shot transfer to unseen interventions and unsupervised drug-target gene discovery. Compared with well-established self-supervised baselines (MAE, DINO) and contrastive alignment methods, our intervention-aware framework achieves significant improvements across all tasks. In summary, our contributions are:
\begin{itemize}[leftmargin=*]
\item We introduce an intervention-aware distillation framework that uses perturbational transcriptomics to guide image representation learning toward mechanistic understanding, with theoretical guarantees showing that transcriptomic guidance tightens the risk bound for image-based intervention prediction.
\item We propose a chemistry-aware codebook that structures supervision by drug mechanism, combined with a transcriptome-conditioned teacher that leverages a fine-tuned scFM to encode cell context and intervention effects.
\item We demonstrate strong empirical performance on one-shot transfer to unseen interventions and unsupervised drug-target discovery across Cell Painting and RxRx datasets, outperforming self-supervised and alignment baselines.
\end{itemize}

\section{Related Work}

\subsection{Phenotypic Profiling of Drug Perturbations}
Microscopy-based profiling, such as Cell Painting~\cite{bray2016cell}, uses multiplexed fluorescent imaging to capture morphological responses to drug perturbations at scale, enabling applications in novel biological relationship discovery \cite{kraus2025rxrx3,chandrasekaran2024three,sivanandan2023pooled}. These assays are cost-effective and high-throughput but lack direct mechanistic interpretation. Recent work applies SSL, such as  MAE~\cite{kraus2024masked} and DINO~\cite{kenyon2024vitally}, to extract morphological features from unlabeled images, though these remain weakly connected to biological mechanisms. Perturbational transcriptomics captures genome-wide gene expression changes, providing pathway-level mechanistic insights through datasets like LINCS L1000~\cite{subramanian2017next} and Tahoe-100M~\cite{zhang2025tahoe}. scFMs such as scGPT~\cite{hao2024large} and Geneformer~\cite{theodoris2023transfer} can be fine-tuned to predict perturbation responses~\cite{peidli2024scperturb}, offering rich mechanistic representations but at higher cost and lower throughput. These modalities are complementary, images provide scale, transcriptomics provides mechanism, yet paired datasets remain scarce and often weakly paired \cite{bendidi2025cross}, with the same drug profiled at different doses, batch, or cell types across modalities.

\subsection{Multimodal Learning for Perturbations}
Recent work explores molecular representation learning from cellular data for cross-modal retrieval and molecular property prediction. Most methods employ alignment objectives: aligning drug structures with cellular images~\cite{sanchez2023cloome,wang2023removing,zheng2024cross}, or integrating gene expression with other omics to provide multi-faceted drug profiles~\cite{liu2024learning,wang2023removing}. However, these approaches primarily aim to learn better drug or transcriptomic representations for downstream molecular tasks, such as predicting chemical properties and drug-drug interactions, with images serving mainly as auxiliary signals to enrich compound embeddings rather than as the target modality for improvement. A recent complementary direction distills image foundation models to improve transcriptomics representations~\cite{bendidi2025cross}, but the reverse that using transcriptomics to guide image learning remains underexplored. Critically, existing alignment methods center on drug identity rather than intervention effects, collapsing dose and cell-type variations into binary matches and struggling with weakly paired data. We instead propose intervention-aware distillation that using transcriptomics  to guide image representations toward mechanistic understanding.
\section{Method}

\subsection{Problem Statement}
In this paper, we study cellular responses to drug interventions observed through two complementary phenotypic readouts: cellular imaging $I$ and transcriptomics (RNA-seq) $R$.
\begin{wrapfigure}{r}{0.46\linewidth}  
  \centering
  \includegraphics[width=\linewidth]{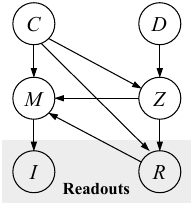}
  \vspace{-8pt}
  \caption{Causal pathway of drug interventions across modalities.}
  
  \label{fig:causalgraph}
\end{wrapfigure}

\noindent\textbf{Causal Structure}. \autoref{fig:causalgraph} depicts the multiscale cellular state under a drug intervention. Let $D$ denote the molecular structure of the interventional drug; after exposure, the drug binds to its molecular target(s), producing a target-engagement/signaling state $Z$. This perturbed state then drives changes at two levels: the transcriptome $R$ and the morphology $M$. Path $Z \!\to\! R$ corresponds to target binding perturbing signaling cascades and transcription-factor activity, inducing gene-expression changes captured by mRNA-seq. Morphology $M$ is shaped both by direct drug-induced protein/biophysical effects via path $Z \!\to\! M$ and by indirect, transcription-mediated remodeling in which expression programs adjust protein levels/modifications and reorganize cellular architecture via the path $Z \!\to\! R \!\to\! M$. Finally, morphology is observed as images $I$ through the optical pipeline that contains staining, imaging, and preprocessing. Cell type $C$ provides the biological context, shaping target engagement $Z$, basal transcriptomic state $R$, and intrinsic morphology $M$, thereby introducing three conditioning paths, $C\!\to\!Z$, $C\!\to\!R$, and $C\!\to\!M$, that modulate the drug-induced cascades.

\noindent\textbf{Scope and Assumptions.}  We focus on static and do not explicitly model temporal dynamics or multi-cellular interactions in our primary framework. While batch effects are a practical concern in multi-plate imaging and cross-laboratory RNA-seq data, we do not model them explicitly in our causal graph; we discuss mitigation strategies and their relationship to our framework in Appendix.

\noindent\textbf{Task Definition}. Given this setup, we target image-based drug discovery: learning image representations that enable drug identification, mechanism clustering, and generalization to unseen compounds and cell types. Formally, we aim to estimate  $P(D\,|\,I,C)$, the posterior distribution over drugs given cellular images and cell type, which captures how morphology encodes intervention-specific signals. At test time, only images $I$ and cell type $C$ are available; transcriptomics $R$ serves as privileged information during training to guide the image encoder toward mechanistic understanding.

\begin{figure*}[t]
    \centering
    \includegraphics[width=0.85\linewidth]{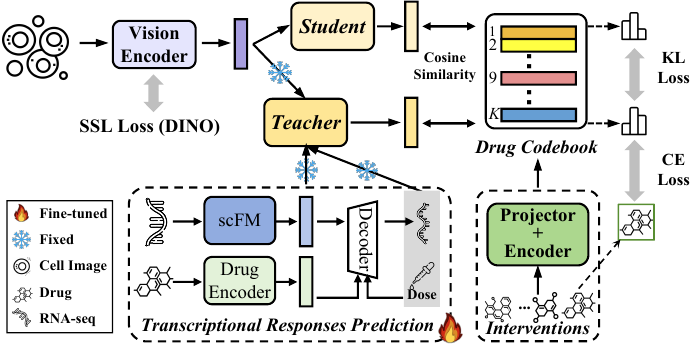}
        \vspace{-4pt}
    \caption{Framework overview. Our approach uses a chemistry-aware codebook and transcriptome-conditioned teacher to guide image student learning through distillation. Only images are required at test time. }

    \label{fig:framework}
    \vspace{-10pt}
\end{figure*}

\subsection{Transcriptome-Guided Image Learning}

We now establish the theoretical foundation for using transcriptomic guidance to improve image-based intervention prediction. Our goal is to learn a predictor $S_T(D | I, C)$ that estimates the drug (intervention) posterior from images and cell type alone. During training, however, we have access to paired transcriptomes $R$ that capture perturbation-induced gene expression changes: responses that reflect pathway-level mechanisms not directly observable in morphology. The key question is whether and how incorporating transcriptomic information during training provably improve the learned image representations

We formalize this through a risk-bound analysis on the training distribution. Define $\mathcal{L}[q]$ as the expected log-loss of a predictor $q$ that depends only on $(I, C)$, where $\mathcal{P}$ is the joint distribution of observed training data. Consider a richer predictor $T(D | I, R, C)$ that incorporates RNA during training, producing a probability distribution over drugs $D$. The following proposition shows that training with transcriptomics provably tightens the risk bound.
\begin{proposition}[Transcriptome-Guided Learning Bound]
\label{prop:transcriptome-bound}
Given readout variables $I$ and $R$ conditioned on cell type $C$, consider a predictor $T: (I,R,C) \mapsto \Delta(\mathcal{D})$ that outputs a distribution over drugs $D$, where $\Delta(\mathcal{D})$ denotes the probability simplex. Define the marginalized predictor that depends only on $(I,C)$:
$$S_T(D|I,C) = \mathbb{E}_{R|I,C}[T(D|I,R,C)]$$
Let $\mathcal{L}[q] = \mathbb{E}_{(I,C,D) \sim P}[-\log q(D|I,C)]$ denote the log-loss risk for any predictor $q$ that depends only on $(I,C)$. Then $\mathcal{L}[S_T]$ is upper bounded by:
$$H(D|I,C) + \mathbb{E}_{(I,R,C)}[\mathrm{KL}(P(D|I,R,C) \| T(\cdot|I,R,C))],$$
where $H(D|I,C)$ is the conditional entropy and the second term is the training objective that can be directly optimized.
\end{proposition}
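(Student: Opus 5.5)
The plan is to decompose the log-loss of $S_T$ into an entropy term and a KL term, then use joint convexity of KL to move the expectation over $R$ outside. First, because $S_T(\cdot|I,C)$ depends only on $(I,C)$, the standard cross-entropy decomposition gives
\begin{equation*}
\mathcal{L}[S_T] = H(D|I,C) + \mathbb{E}_{(I,C)}\big[\mathrm{KL}(P(D|I,C)\,\|\,S_T(\cdot|I,C))\big].
\end{equation*}
This follows by adding and subtracting $\log P(D|I,C)$ inside $\mathbb{E}[-\log S_T(D|I,C)]$. It is valid as long as $S_T$ puts positive mass wherever $P(D|I,C)$ does. If it does not, the right-hand side of the proposition is also infinite, so the bound holds trivially.

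Second, I write both arguments of that KL as mixtures over the same conditional law $P(R|I,C)$. By the law of total probability,
\begin{equation*}
P(D|I,C)=\mathbb{E}_{R|I,C}\big[P(D|I,R,C)\big],
\end{equation*}
and by definition
\begin{equation*}
S_T(D|I,C)=\mathbb{E}_{R|I,C}\big[T(D|I,R,C)\big].
\end{equation*}
The mixing weights are identical in the two mixtures. This is the key structural point, and it is why the marginalization in the definition of $S_T$ must use $P(R|I,C)$ rather than some other distribution.

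Third, I apply joint convexity of KL divergence, i.e.\ the log-sum inequality extended to integrals by Jensen. For each fixed $(I,C)$ it gives
\begin{equation*}
\mathrm{KL}\big(\mathbb{E}_{R}[P(\cdot|I,R,C)]\,\|\,\mathbb{E}_R[T(\cdot|I,R,C)]\big)\le \mathbb{E}_{R|I,C}\big[\mathrm{KL}(P(\cdot|I,R,C)\,\|\,T(\cdot|I,R,C))\big].
\end{equation*}
Taking the outer expectation over $(I,C)$ and applying the tower property turns the right side into $\mathbb{E}_{(I,R,C)}[\mathrm{KL}(P(D|I,R,C)\|T)]$. Substituting this into the first-step decomposition gives the claimed bound.

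The main obstacle is the third step when $R$ is continuous and $D$ is discrete. There I must justify joint convexity for a continuous mixture, not just a finite one. I would first prove the pointwise log-sum inequality
\begin{equation*}
a\log\frac{a}{b} \le \int \alpha(r)\log\frac{\alpha(r)}{\beta(r)}\,d\mu(r),
\end{equation*}
where $a=\int\alpha\,d\mu$ and $b=\int\beta\,d\mu$. This follows from Jensen applied to the convex function $x\mapsto x\log x$ under the probability measure $\beta\,d\mu/b$. I would then sum over $d\in\mathcal{D}$, setting $\alpha(r)=P(d|I,r,C)$, $\beta(r)=T(d|I,r,C)$ and $\mu=P(R|I,C)$. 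Measurability and integrability issues are handled by allowing values in $[0,\infty]$, since all integrands are nonnegative after the standard rearrangement. Finally, I note that equality holds when $T$ equals $P(D|I,R,C)$ almost surely, so the bound is tight at the optimum of the training objective.
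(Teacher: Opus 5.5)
Your proposal is correct and follows the same route as the paper's proof. Both use the cross-entropy decomposition $\mathcal{L}[S_T]=H(D|I,C)+\mathbb{E}_{(I,C)}[\mathrm{KL}(P(D|I,C)\,\|\,S_T(\cdot|I,C))]$, write $P(D|I,C)$ and $S_T$ as mixtures under the same weights $P(R|I,C)$, apply joint convexity (Jensen) of KL, and average over $(I,C)$. Your additions (the integral log-sum inequality for continuous $R$, the infinite-KL/support case, and the equality case at $T=P(D|I,R,C)$) fill in details the paper leaves implicit, and you rightly do not rely on the paper's unused remark that $P(D|I,R,C)=P(D|R,C)$.
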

This bound establishes that fitting the conditional distribution $P(D | I, R, C)$ during training, and marginalizing over $R$ at test time, yields a tighter risk bound on the training distribution than directly fitting $P(D | I, C)$ without RNA guidance. The key insight is that transcriptomic responses provide mechanistic information: conditioning on gene expression allows the teacher $T$ to distinguish drugs that produce similar morphologies but engage different molecular pathways. When this knowledge is distilled into the image student through marginalization, the student learns representations that align with biological mechanisms rather than superficial visual patterns. The complete proof is provided in the Appendix.

The proposition suggests three design principles for our framework: (i) model $P(D | I, R, C)$ in a way that respects the causal structure of interventions, (ii) explicitly condition on cell type and dose to capture context-specific perturbation effects, and (iii) train the image student to approximate the marginalized posterior $S_T$. Note that the bound concerns the training distribution-it establishes that transcriptomic guidance improves learning on observed data but does not directly guarantee generalization to unseen drugs. However, we hypothesize that the mechanistic features learned through this process will transfer to novel compounds with similar mechanisms, which we validate empirically in Section 4. We instantiate these principles in the following section.

\subsection{Transcriptome-Guided Distillation Framework}

We now present our transcriptome-guided distillation framework, which operationalizes the theoretical insights from Section 3.2. \autoref{fig:framework} illustrates the overall architecture. Our framework consists of three key components. First, we construct a chemistry-aware codebook that organizes the drug space into $K$ prototypes based on molecular representations, providing a structured target space where similar drugs are clustered. Second, we design a transcriptome-conditioned teacher that integrates paired images $I$, RNA-seq $R$, and intervention metadata (cell type $C$, doses $\delta_I, \delta_R$) to produce soft distributions over the codebook, capturing mechanistic signals from transcriptomics. Third, an image-only student learns to predict the same distributions from images alone, distilling the teacher's multimodal knowledge. We now detail each component.

\subsubsection{Chemistry-aware Codebook}
We construct a codebook that organizes drugs by molecular similarity. Given the set of $K$ training drugs, we extract a molecular representation for each drug. These representations form the rows of a drug feature matrix, which we project into the same embedding space as the teacher and student encoders using a lightweight MLP followed by $\ell_2$ normalization. This yields a prototype matrix $V \in \mathbb{R}^{K \times d}$, where each row $\mathbf{v}_k$ represents a drug prototype on the unit hypersphere that are fully learnable parameters updated through backpropagation during training. The chemistry-aware structure allows models to predict soft distributions over prototypes rather than hard drug identities, facilitating transfer to novel compounds through chemical similarity. While the prototypes adapt to encode mechanistic similarity as the teacher learns to predict codebook distributions from both images and transcriptomics. 

\subsubsection{Transcriptome-conditioned Teacher}
The teacher model $T_{\theta}$ takes paired multi-modal observations and produces distributions over the codebook that reflect intervention-specific mechanistic signatures. A key design principle is that the teacher does not directly encode the drug's molecular structure; instead, it must infer pathway-level perturbation effects from the observed cellular responses (morphology and transcriptomics), and supervision from the codebook provides the link between inferred mechanisms and drug chemistry. This avoids shortcut learning where the teacher trivially matches drug identity.

To obtain robust encodings of cell type and dose-dependent transcriptomic effects, we employ a scFM~\cite{hao2024large}, fine-tuned on perturbation response prediction. During fine-tuning, the model takes as input a cell's basal gene expression $R_{\text{basal}}$, a drug representation, and a dose value $\delta_r$, and predicts the post-perturbation expression profile. Specifically, the scFM encodes $R_{\text{basal}}$ and combines it with drug and dose embeddings to form a latent state, which is then decoded to reconstruct the perturbed transcriptome. After fine-tuning, we use the trained scFM encoder to extract two types of representations: (i) a cell-type encoding $\mathbf{h}_C$ from basal gene expression, providing stable biological context across perturbations; and (ii) a dose-conditioned transcriptome encoding that disentangles dose information from the observed RNA-seq measurements. The scFM's training objective ensures these encodings capture interpretable perturbation-response structure. Implementation details are provided in Appendix.

The teacher then encodes three sources of information:
\begin{itemize}[leftmargin=*]
\item \textbf{Image with dose}: We encode the cellular image $I$ and then detach dose-specific information $\delta_I$ from the image features using a FiLM-style conditioning mechanism~\cite{perez2018film}, producing a dose-aware image representation $\mathbf{h}_I$.
\item \textbf{RNA with dose}: We encode the observed perturbed transcriptome $R$ along with its associated dose $\delta_R$, yielding $\mathbf{h}_R$. The dose disentanglement allows this encoding to separately represent the perturbation magnitude.
\item \textbf{Cell type}: We use the basal gene expression encoding $\mathbf{h}_C$ from the scFM to represent cell-type-specific context.
\end{itemize}
Critically, by encoding image dose $\delta_I$ and RNA dose $\delta_R$ separately within their respective modality features, the teacher can handle weakly paired data where doses differ across modalities. Rather than treating mismatched doses as identical through binary label matching, the teacher learns to reason about dose-dependent effects explicitly, producing appropriate soft targets that reflect the actual dosage contexts.

For the codebook prediction,  the teacher fuses the three encoded features through concat and a projection network $f_t$:
$$\mathbf{h}_{t} = f_{t}([\mathbf{h}_I|| \mathbf{h}_R||\mathbf{h}_C]) \in \mathbb{R}^{d}.$$
It then computes similarity scores with all codebook prototypes, and produces a soft distribution over the codebook via temperature-scaled softmax:

$$P_t= \text{Softmax}\left( \frac{V\cdot\mathbf{h}_{t}}{\tau\cdot|\mathbf{h}_{t}|_2}\right) \in \mathbb{R}^K$$
where  $V \in \mathbb{R}^{K \times d}$ is drug codebook and $\tau$ controls distribution sharpness.  The teacher is trained with cross-entropy loss $\mathcal{L}_{teacher}$ against true drug labels, establishing the connection between observed cellular responses and drug chemistry through the codebook. The learned soft distributions, which encode fine-grained mechanistic similarities beyond hard labels, serve as targets for student distillation.

\subsubsection{Image Student and Distillation Objective}
The student learns to predict codebook distributions from images alone, without explicit cell type input. This design reflects the deployment scenario where only microscopy is available, and is justified by the fact that cell type information is inherently encoded in morphological features. The student uses the shared image encoder from the teacher, followed by a projection head to get the feature $\mathbf{h}_s$, producing codebook distributions $P_s$ from images alone. We train the student to match the teacher's distributions via KL divergence:
$$\mathcal{L}_{\text{distill}} = \mathbb{E}_{(I,R,C,\delta)} \left[ \text{KL}(P_t | P_s \right].$$

Our framework is compatible with any self-supervised method. We apply $\mathcal{L}{\text{ssl}}$ (e.g., DINO, MAE) to all images, paired and unpaired, leveraging abundant imaging data while learning general morphological features. The complete objective is: $$\mathcal{L} = \mathcal{L}_{\text{distill}} +\alpha \cdot   \mathcal{L}_{\text{teacher}} +\beta \cdot \mathcal{L}_{\text{ssl}}.$$

Our framework design operationalizes Proposition 3.1 by enabling the student to learn mechanistic features through two complementary mechanisms: the chemistry-aware codebook provides transfer to novel compounds via chemical similarity, while the transcriptome-conditioned teacher captures pathway-level perturbation effects that generalize beyond training drugs. By explicitly handling dose variations and avoiding identity shortcuts, the framework learns transferable representations from limited paired data. We empirically validate this generalization in Section 4.

\begin{table}[t]
    \centering
    \begin{tabular}{l|rrr}
    \toprule
    Dataset     & RxRx3  & CPG-Pilot &CPG-12  \\
    \midrule
    Image     &  61,690 & 93,696 &916,721 \\
    Paired RNA & 3,682 & 1,883 &22,066 \\
    \# Drugs & 1,662 & 302 &30,340\\
    \# Overlaps & 235 & 85 &6,989\\
    \# Eval Intervention & 736 & 260 & 121  \\
    \bottomrule
    \end{tabular}
    \vspace{-4pt}
    \caption{Dataset statistics and imaging-transcriptomics pairing. Numbers indicate image samples, RNA profiles, and compound overlap across modalities.}
    \vspace{-12pt}
    \label{tab:placeholder}
\end{table}

\begin{table*}[t]
    \centering
    \begin{tabular}{l|rr|rr|rr|rr|rr}
    \toprule
     Dataset & \multicolumn{4}{c|}{RxRx3} & \multicolumn{4}{c|}{CPG-Pilot} & \multicolumn{2}{c}{CPG-12} \\
     \midrule
     Backbone & \multicolumn{2}{c|}{ViT} & \multicolumn{2}{c|}{CA-ViT} & \multicolumn{2}{c|}{ViT} & \multicolumn{2}{c|}{CA-ViT} & \multicolumn{2}{c}{ViT} \\
     Metric   & Top-1 & Top-5 & Top-1 & Top-5& Top-1 & Top-5& Top-1 & Top-5& Top-1 & Top-5\\
     \midrule
     MAE & 1.54  &6.79 & 1.00 & 3.26 & 2.95 & 12.31 & 2.05 & 8.97 & 4.68 & 11.02\\
     DINO & 4.82 & 14.86 &4.62 &13.32 &11.36 & 30.82 & 13.97 & 33.46 & 36.36&60.01\\
     CL (D)& \underline{5.16} & \underline{15.63}  & \underline{5.93}&  \underline{16.40}& \underline{14.61}& \underline{36.66} & \underline{15.17} & \underline{38.46} & \underline{40.04} & \underline{65.07}\\
     CL (R)& 4.93 & 14.27 & 5.25 & 13.90&13.97&30.64& 14.87 & 35.51 &30.57 & 60.89\\
     \midrule
     TIDE & \textbf{5.62} & \textbf{16.86}& \textbf{6.04} & \textbf{17.27} & \textbf{16.01} & \textbf{39.12}&  \textbf{17.80} & \textbf{40.74} &\textbf{42.09} & \textbf{71.07}\\
    \bottomrule
    \end{tabular}
    \vspace{-6pt}
    \caption{One-shot transfer to unseen interventions. We report Top-1 and Top-5 accuracy (\%) for transfer to unseen genetic perturbations (CPG-Pilot, RxRx3) and unseen compounds (CPG-12).  Best results highlighted are in bold, and second-best results are underlined}
    \label{tab:one-shot}
    \vspace{-12pt}
\end{table*}

\section{Experiments}
We evaluate our framework on image-based drug discovery tasks to assess whether transcriptomic guidance improves (i) generalization to novel interventions and (ii) recovery of mechanistic drug-target relationships. We conduct experiments on three large-scale Cell Painting datasets paired with perturbational transcriptomics (L1000), spanning multiple cell lines and thousands of compounds. We compare against self-supervised baselines (MAE, DINO) and alignment-based methods (CLIP-style contrastive learning) to demonstrate the benefit of our approach. 
\subsection{Datasets}

\textbf{Imaging readouts.} We use three Cell Painting datasets representing different scales and perturbation types: Cell Painting and RxRx.
\begin{enumerate}
    \item \textbf{JUMP-Pilot (CPG-Pilot)~\cite{chandrasekaran2024three}} profiles A549 and U2OS cells under both chemical and genetic perturbations with standardized Cell Painting protocols. The joint coverage of compounds and CRISPR perturbations enables evaluation of mechanism-level relationships and chemical-genetic concordance.
    \item \textbf{CPG0012 (CPG-12)~\cite{bray2017dataset}} is a large-scale collection of $\sim$30,000 bioactive compounds in U2OS cells with consistent re-profiling. Its scale and diversity support evaluation of cross-compound generalization and large-scale drug discovery applications.
    \item \textbf{RxRx3-core (RxRx3)~\cite{kraus2025rxrx3}} is a research-friendly subset of RxRx3 with six-channel imaging in HUVEC cells, covering both drug perturbations and CRISPR knockouts. The shared indexing across chemical and genetic conditions facilitates drug-gene association analyses.
\end{enumerate}

\noindent\textbf{Transcriptomic readouts}. 
We pair imaging data with L1000~\cite{subramanian2017next}, a high-throughput perturbational transcriptomics platform that measures 978 landmark genes. L1000 provides bulk-cell gene expression profiles, matching the population-level nature of our imaging readouts. We extract L1000 profiles for compounds in the same cell lines as our imaging datasets, enabling transcriptome-guided image learning. To leverage L1000 data with our framework, we fine-tune a single-cell foundation model (scGPT~\cite{hao2024large}) on perturbation response prediction using L1000 measurements. Although scGPT is pretrained on single-cell atlases, we adapt it to bulk-cell L1000 data and training the model to predict perturbed expression given basal profiles, drug representations, and doses. This fine-tuned scFM provides the cell-type and dose encoders used in our teacher model (Section 3.3.2).

Table 1 summarizes the pairing between imaging and transcriptomic data. Importantly, the pairing is weakly paired: images and RNA-seq measurements share the same cell type and compound but often differ in dose, reflecting realistic data collection constraints. This weak pairing is a key challenge our framework addresses through explicit dose conditioning.

\subsection{Tasks and Evaluation Metrics}
We evaluate on two complementary tasks that assess generalization and mechanistic understanding:
\subsubsection{Task 1: One-shot transfer to unseen intervention}
This task evaluates whether learned image representations generalize to novel perturbations in a low-data regime. We follow a metric-based few-shot learning protocol: given a pretrained and frozen image encoder, we initialize a cosine similarity-based classifier from 1-shot support examples (one image per unseen intervention class), then optimize only the classification head on query images while keeping the encoder frozen. Features and class prototypes are L2-normalized, and predictions use temperature-scaled cosine similarity.

The specific unseen interventions differ by dataset characteristics. On CPG-Pilot and RxRx3, we evaluate transfer to unseen genetic perturbations (CRISPR knockouts), as these datasets contain limited chemical diversity but extensive genetic screens measured with the same imaging protocol. On CPG-12, which contains more than 30,000 diverse compounds, we evaluate transfer to unseen compounds by holding out a subset of drugs. Both settings assess the core capability of generalizing features to novel interventions.

\subsubsection{Task 2: Unsupervised drug-target gene discovery}
This task evaluates whether image representations capture mechanistic relationships by testing drug-target gene association without explicit supervision. Following ~\citet{kraus2025rxrx3}, we compute embeddings for compound-treated images and gene knockout images, score all drug-gene pairs by cosine similarity, and evaluate against curated ground-truth associations derived from each dataset's benchmark annotations, which is further filtered for high-confidence links. We report results averaged over 100 random seeds.

On RxRx3, compounds appear at multiple doses and may target multiple genes. We compute per-dose drug embeddings (averaged over replicate images), score associations at each dose, and report dose-averaged Average Precision (AP) and AUC. On CPG-Pilot, compounds are profiled at a single dose with a single primary target gene; we report AP and Hit@k. This task directly measures whether the learned representations reflect biological mechanisms, a key advantage of transcriptomic guidance.

\subsection{Baselines and Implementation}
We refer to our framework as TIDE (Transcriptome-Informed Distillation for image Encoding) and compare against two families of methods:

\noindent \textbf{SSL Methods}: MAE~\cite{he2022masked} and DINO~\cite{caron2021emerging}, widely adopted for cellular imaging. These methods learn from images alone without transcriptomic or drug information.

\noindent \textbf{Alignment-based contrastive learning}: We implement CLIP-style approaches that align image embeddings to auxiliary modalities:
CL(D): Following recent phenomolecular retrieval work~\cite{sanchez2023cloome,fradkin2024molecules}, aligns images to drug molecular representations;
CL(R): Aligns images and transcriptomic readouts based on shared drug perturbations \cite{wang2023removing,liu2024learning}. To ensure fair comparison, we evaluate hybrid variants that combine contrastive alignment with DINO self-supervision, which typically improve performance. 

\noindent \textbf{Vision Encoder}. All methods use the same backbone architectures for fair comparison. We evaluate two vision transformers: a standard ViT~\cite{dosovitskiy2020image} and a Channel-Agnostic ViT (CA-ViT)~\cite{kraus2024masked}, a cellular imaging-specific architecture that achieves comparable performance to ViT. Model sizes are matched to dataset scale: ViT-Small for CPG-Pilot and RxRx3, ViT-Base for CPG-12 (CA-ViT-Base is omitted on CPG-12 due to computational cost).

All methods train for 400 epochs with matched augmentations and optimizers within each method family. We select checkpoints via k-nearest neighbor (k=20) evaluation on a held-out validation split (10 images per perturbation class). For Task 1, we split interventions (not samples) into train/val/test to ensure true generalization evaluation. For Task 2, we use all available data to compute embeddings and evaluate on curated drug-target associations. Detailed hyper-parameters and split information are provided in Appendix.

\begin{table}[t]
    \centering
    \resizebox{0.49\textwidth}{!}{
    \begin{tabular}{l|cccc|cccc}
    \toprule
     Backbone    & \multicolumn{4}{c|}{ViT} & \multicolumn{4}{c}{CA-ViT}\\
     \midrule
     Dataset & \multicolumn{2}{c}{RxRx3} & \multicolumn{2}{c|}{CPG-Pilot}&  \multicolumn{2}{c}{RxRx3} & \multicolumn{2}{c}{CPG-Pilot}\\
     Metric & AP & AUC & AP & Hit@5 & AP & AUC & AP & Hit@5\\    \midrule
     MAE &  0.256  & 0.538 & 0.079&0.103& 0.245 &0.518 &0.097 &0.124\\
     DINO & \underline{0.300} & \underline{0.633} & \underline{0.109} & \underline{0.135} & 0.252 & \underline{0.545} & \underline{0.111} & \underline{0.154}\\
     CL (D)& 0.252 & 0.537 & 0.099 &0.127 & \underline{0.255} & 0.540 &0.103 &0.131\\
     CL (R)& 0.247 & 0.532 & 0.105 & 0.134 & 0.250 & 0.540 &0.105 & 0.135\\
     \midrule
     TIDE  & \textbf{0.317} & \textbf{0.640} & \textbf{0.119} & \textbf{0.149} & \textbf{0.301} & \textbf{0.612} & \textbf{0.120} & \textbf{0.171}\\
     \bottomrule
    \end{tabular}}
    \vspace{-4pt}
    \caption{Unsupervised drug-target gene discovery. We report AP and retrieval metrics without explicit supervision on drug-target associations. Best results in bold.}
    \vspace{-12pt}
    \label{tab:1}
\end{table}
\subsection{Main Results}
We design experiments to answer two research questions that validate TIDE's core contributions:
\begin{itemize}[leftmargin=*]
\item \textbf{RQ1}: Does transcriptomic guidance improve generalization to unseen interventions?
\item \textbf{RQ2:} Do learned image representations capture biologically meaningful mechanisms?
\end{itemize}

\autoref{tab:one-shot} presents Top-1 and Top-5 accuracy on one-shot classification. All results are averaged over 50 runs. TIDE achieves the best performance across all settings, substantially outperforming all baselines. CL(D) attains the second-best results among baselines, which together with TIDE's performance validates the importance of incorporating perturbation information for model training. CL(R) performs slightly better than SSL baselines but falls short of drug-based alignment, as the limited availability of paired transcriptomic data constrains its training signal compared to CL(D)'s ability to align with molecular representations for all drugs. Among SSL methods, DINO substantially outperforms MAE across all datasets. Although training for the same number of epochs, DINO's use of local and global crop augmentations results in higher computational cost per epoch, effectively providing richer training signal. Comparing backbones, ViT and CA-ViT show comparable overall performance on RxRx3 and CPG-Pilot across most methods. However, with TIDE CA-ViT demonstrates superior results, as its channel-agnostic patchification design achieves better representational capacity under sufficient training. TIDE's consistent improvements across datasets and backbones demonstrate that transcriptomic guidance enables image models to learn mechanistically grounded features that generalize to novel interventions.

\autoref{tab:1} presents performance on unsupervised drug-target gene discovery. Our model significantly outperforms all baselines on both datasets, although overall performance remains modest across all methods. As reported in \cite{chandrasekaran2024three}, cross-modal perturbation pairing signals are inherently weak, only slightly above random. Most baseline results on RxRx3 are only marginally better than random, underscoring the task's difficulty and demonstrating that our model learns mechanistic features beyond visual supervision alone.
Unlike the one-shot transfer task, CL(D) shows no clear advantage over DINO or CL(R) in this setting. This reflects the nature of drug-target discovery that while molecular structure alignment helps CL(D) learn drug identity and chemical similarity, identifying biological targets requires pathway-level mechanistic understanding that chemical fingerprints do not directly encode. In contrast, even limited transcriptomic signals or morphological patterns learned through augmentation-rich SSL may capture aspects of cellular response more relevant to target identification, explaining their comparable performance to CL(D) despite the differences observed in Task 1. The substantial improvement TIDE achieves on this challenging task, where baselines barely exceed random performance, confirms that our framework successfully distills pathway-level mechanistic knowledge from transcriptomics into image representations, enabling identification of biologically meaningful drug-target relationships from morphology alone.

\subsection{Ablation Studies and Analysis}
We conduct ablation studies to dissect the contributions of TIDE's design components and validate key hypotheses. 
\subsubsection{Effect of Transcriptomic Data}
To validate whether richer transcriptomic guidance improves performance, we separately evaluate TIDE on A549 and U2OS cell lines in CPG-Pilot. These two cell lines have different overlap with L1000 gene expression profiles: A549 has more paired compounds than U2OS. By comparing performance across these two settings, we can assess whether increased transcriptomic coverage leads to better mechanistic learning. \autoref{tab:ab1} presents one-shot transfer accuracy and drug-target discovery performance for both cell lines on ViT. Examining the DINO baseline performance, we observe that both cell lines achieve comparable results on one-shot transfer, with U2OS even slightly outperforming A549. However, after training with TIDE, A549 that has more paired transcriptomic data, achieves better performance and larger improvements over its DINO baseline. On the drug-target discovery task, while A549 demonstrates advantages on this task even with DINO, it shows more improvements with TIDE compared to U2OS. These findings validate the importance of transcriptomic information: when sufficient paired data is available, TIDE can more effectively distill mechanistic knowledge from transcriptomics, leading to stronger gains in both generalization and mechanistic understanding.



\begin{figure}[h]
    \centering
    \begin{subfigure}[b]{0.49\linewidth}
        \centering
        \includegraphics[width=\linewidth]{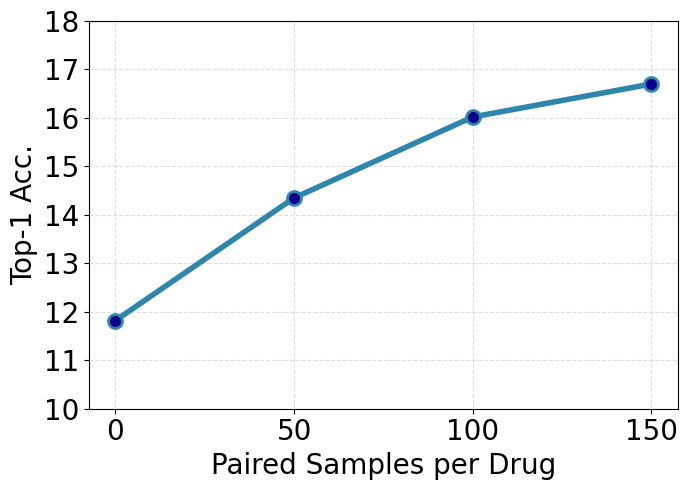}
        \caption{TIDE with fine-tuned scFM}
        \label{fig:ab2-a}
    \end{subfigure}
    \begin{subfigure}[b]{0.49\linewidth}
        \centering
        \includegraphics[width=\linewidth]{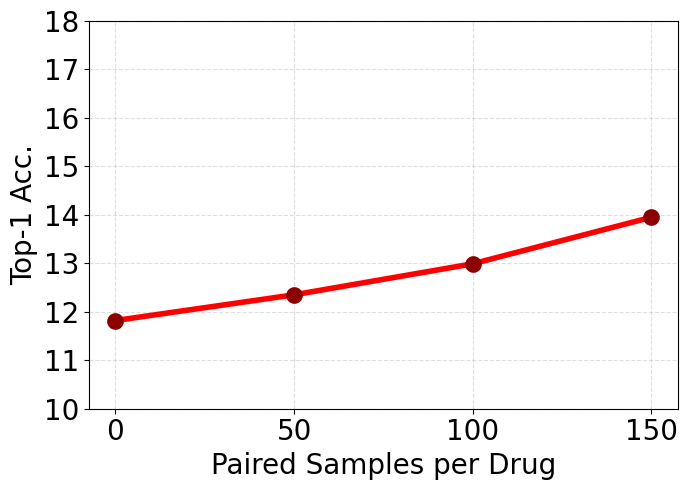}
        \caption{TIDE with pretrained scFM}
        \label{fig:ab2-b}
    \end{subfigure}
    \vspace{-6pt}
    \caption{Effect of paired sample size and scFM fine-tuning on one-shot transfer performance of TIDE. }
    \label{fig:ab2}
\end{figure}

\begin{table}[t]
    \centering
    \resizebox{0.495\textwidth}{!}{
    \begin{tabular}{l|cccc|cccc}
    \toprule
     Task    & \multicolumn{4}{c|}{one-shot transfer} & \multicolumn{4}{c}{drug-target discovery}\\
     \midrule
     Cell Line & \multicolumn{2}{c}{U2OS} & \multicolumn{2}{c|}{A549}&  \multicolumn{2}{c}{U2OS} & \multicolumn{2}{c}{A549}\\
     Metric & Top-1 & Top-5 & Top-1 & Top-5 & AP & Hit@5 & AP & Hit@5\\    \midrule
     DINO & 12.8 & 28.9 & 12.1 &28.2& 0.084 & 0.111 & 0.133  &0.159\\
     \midrule
     TIDE & 15.9 &36.9 &16.3 & 39.8& 0.092 & 0.120 &0.146 & 0.178\\

     \bottomrule
    \end{tabular}}
    \vspace{-8pt}
    \caption{Performance on CPG-Pilot separately for U2OS and A549.}
    \vspace{-16pt}
    \label{tab:ab1}
\end{table}

\subsubsection{Sample Efficiency and Effect of scFM Fine-tuning}
To understand how the quantity of paired data per drug affects performance and validate the necessity of scFM fine-tuning on perturbation response prediction, we vary the number of paired image-transcriptome samples per drug from 0 to 150 on CPG-Pilot. At 0 paired samples, TIDE reduce to DINO. We compare TIDE using a fine-tuned scFM encoder versus one with pretrained-only weights. \autoref{fig:ab2} presents the results. With fine-tuned scFM, TIDE shows substantial performance improvement as paired samples increase. This demonstrates that the fine-tuned encoder effectively extracts dose-specific and cell-context information from transcriptomic measurements, and more paired samples provide richer supervision for distillation. In contrast, TIDE using pretrained-only scFM shows only marginal improvement, despite also benefiting from the additional training images that accompany more paired samples. The minimal gain isolates the effect of increased image data alone and confirms that without fine-tuning on perturbation response prediction, the scFM encoder cannot effectively disentangle dose effects or encode cell-type-specific basal states.
\subsubsection{Analysis and Visualization}
To understand what TIDE's learned codebook encodes, we visualize drug representations in chemical fingerprint space versus learned codebook space. \autoref{fig:vis} presents a case study of two compounds, BMS-536924 and WH-4-023, profiled on A549 cells in CPG-Pilot. While the morphological images (left) show complex cellular responses with no obvious shared visual patterns to the human, the two drugs share a common mechanism of action: both are ATP-competitive tyrosine kinase inhibitors that modulate downstream signaling pathways. The middle panel shows UMAP projection of all CPG-Pilot drugs based on FCFP fingerprints \cite{rogers2010extended}. BMS-536924 and WH-4-023 are distant in chemical space, reflecting their different molecular structures, while in the learned codebook embedding space the two drugs are positioned much closer, indicating that the codebook has learned to group them by their shared mechanistic properties despite structural differences. This demonstrates that through end-to-end training with transcriptomic guidance, the codebook moves beyond chemical similarity to capture biological mechanism similarity, organizing drugs by how they perturb cellular pathways rather than merely by molecular structure. 

\begin{figure}[t]
    \centering
    \includegraphics[width=1\linewidth]{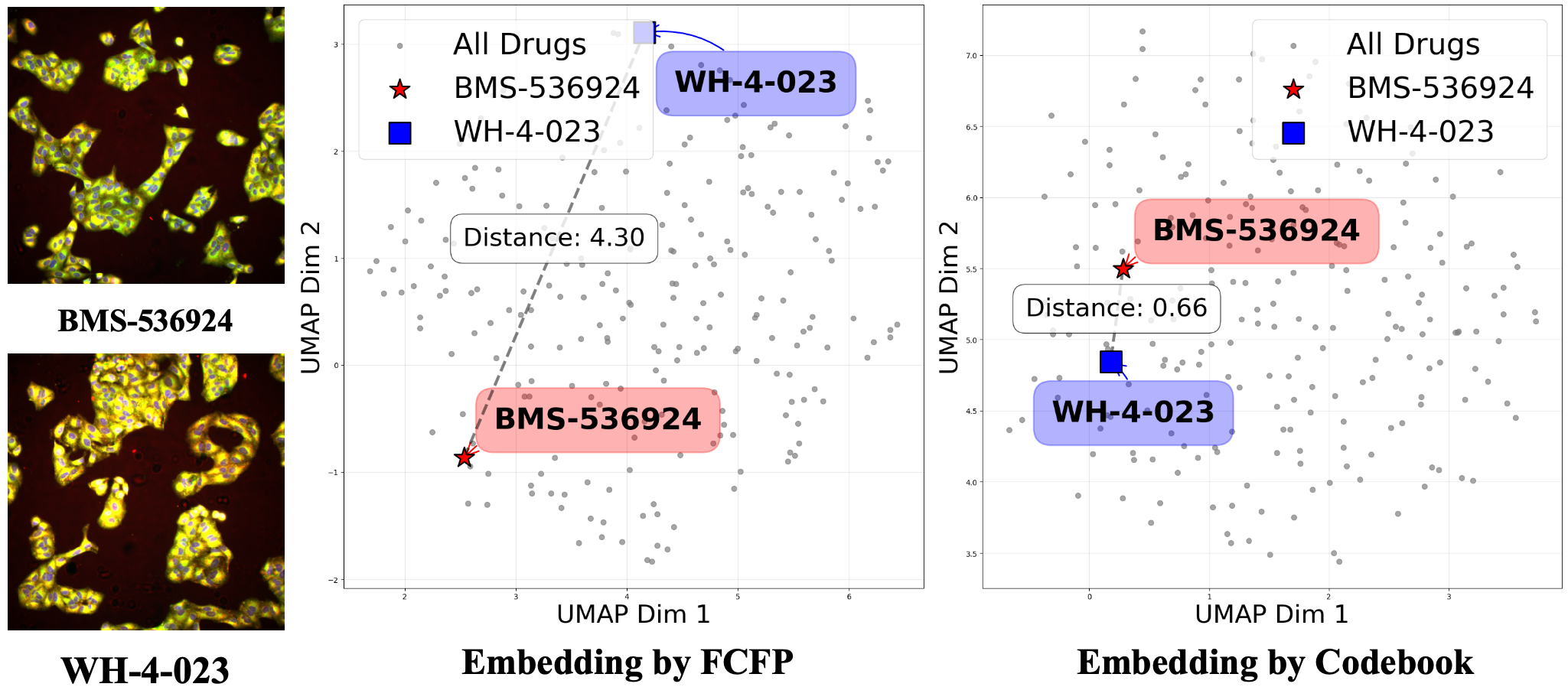}
    \vspace{-16pt}
    \caption{Left: Microscopy images of A549 cells treated with BMS-536924 and WH-4-023. Right: Embedding visualization by FCFP fingerprint v.s. learned codebook embedding}
    \label{fig:vis}
    \vspace{-12pt}
\end{figure}


\section{Conclusion}
We presented a transcriptome-guided distillation framework that uses perturbational RNA-seq as privileged information to guide image representation learning for drug discovery. By structuring supervision through a chemistry-aware codebook and explicitly conditioning on dose and cell type, our approach handles weakly paired data and avoids identity-based shortcuts that limit existing alignment methods. We provided theoretical analysis showing that transcriptomic guidance tightens the risk bound for image-based prediction, and demonstrated improvements over SSL and contrastive baselines on one-shot transfer to unseen interventions and unsupervised drug-target gene discovery.

\section*{Acknowledgments}
This work was funded in part by the National Science Foundation under award number IIS-2145625 and by the National Institutes of Health under awards number R01AI188576 and R01CA301579.

{
    \small
    \bibliographystyle{ieeenat_fullname}
    \bibliography{main}
}

\clearpage

\appendix

\section{Proof of Proposition 3.1}

\subsection{Theorem Statement}

\begin{proposition}[Transcriptome-Guided Learning Bound]
Given readout variables $I$ and $R$ conditioned on cell type $C$, consider a predictor $T: (I, R, C) \mapsto \Delta(\mathcal{D})$ that outputs a distribution over drugs $D$, where $\Delta(\mathcal{D})$ denotes the probability simplex. Define the marginalized predictor that depends only on $(I, C)$:
\begin{equation}
\small 
S_T(D|I, C) = \mathbb{E}_{R|I,C}[T(D|I, R, C)]
\end{equation}

Let $\mathcal{L}[q] = \mathbb{E}_{(I,C,D)\sim\mathcal{P}}[-\log q(D|I, C)]$ denote the log-loss risk for any predictor $q$ that depends only on $(I, C)$. Then $\mathcal{L}[S_T]$ is upper bounded by:
\begin{equation}
\small 
\mathcal{L}[S_T] \leq H(D|I, C) + \mathbb{E}_{(I,R,C)}[\text{KL}(P(D|I, R, C)||T(\cdot|I, R, C))],
\label{eq2}
\end{equation}
where $H(D|I, C)$ is the conditional entropy and the second term is the training objective that can be directly optimized.
\end{proposition}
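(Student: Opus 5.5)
The plan is to avoid applying Jensen's inequality to $-\log$ directly. That route would take an expectation over $R\sim P(R\mid I,C)$ drawn independently of $D$, and would produce a KL term against $P(D\mid I,C)$ rather than against $P(D\mid I,R,C)$. Instead, I will write the excess risk of $S_T$ as a KL divergence at the $(I,C)$ level and then use joint convexity of KL, with both arguments expressed as mixtures over the same weights $P(R\mid I,C)$.

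\textbf{Step 1 (risk decomposition).} For any predictor $q$ that depends only on $(I,C)$, condition on $(I,C)$ and add and subtract $\log P(D\mid I,C)$:
\begin{equation*}
\mathcal{L}[q] = H(D\mid I,C) + \mathbb{E}_{(I,C)}\big[\mathrm{KL}\big(P(\cdot\mid I,C)\,\|\,q(\cdot\mid I,C)\big)\big].
\end{equation*}
This is the standard cross-entropy identity. Applied with $q=S_T$, it reduces the claim to showing, for each fixed $(I,C)$,
\begin{equation*}
\mathrm{KL}\big(P(\cdot\mid I,C)\,\|\,S_T(\cdot\mid I,C)\big) \le \mathbb{E}_{R\mid I,C}\big[\mathrm{KL}\big(P(\cdot\mid I,R,C)\,\|\,T(\cdot\mid I,R,C)\big)\big].
\end{equation*}
Taking $\mathbb{E}_{(I,C)}$ of the right-hand side then gives exactly $\mathbb{E}_{(I,R,C)}[\mathrm{KL}(P(D\mid I,R,C)\|T(\cdot\mid I,R,C))]$.

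\textbf{Step 2 (common mixture representation).} Fix $(I,C)$. The law of total probability gives $P(D\mid I,C)=\mathbb{E}_{R\mid I,C}[P(D\mid I,R,C)]$. By the definition in the statement, $S_T(D\mid I,C)=\mathbb{E}_{R\mid I,C}[T(D\mid I,R,C)]$. Both arguments of the left-hand KL are therefore mixtures of the pairs $\big(P(\cdot\mid I,R,C),\,T(\cdot\mid I,R,C)\big)$ under the same mixing measure $P(R\mid I,C)$. This shared mixing measure is the crucial point, and it holds precisely because $S_T$ marginalizes $R$ under $P(R\mid I,C)$, as stated.

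\textbf{Step 3 (joint convexity).} KL divergence is jointly convex in its pair of arguments, so $\mathrm{KL}(\mathbb{E}_R p_R\,\|\,\mathbb{E}_R t_R)\le \mathbb{E}_R\,\mathrm{KL}(p_R\|t_R)$. This yields the inequality required in Step 1. To be self-contained, I would derive it pointwise in $D$ from the log-sum inequality (equivalently, convexity of the perspective map $(a,b)\mapsto a\log(a/b)$) applied to the integrals over $R$, and then sum over $D$.

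The main obstacle is measure-theoretic rather than conceptual. For a continuous $R$, the log-sum inequality has to be stated for integrals, which requires $P(\cdot\mid I,R,C)\ll T(\cdot\mid I,R,C)$ almost surely. If that fails, the right-hand side is $+\infty$ and the bound holds trivially. Regular conditional distributions must also exist, which is automatic for standard Borel spaces. Since $\mathcal{D}$ is finite, the sum over $D$ and the exchange with $\mathbb{E}_R$ are harmless by Tonelli, because every integrand $a\log(a/b)-a+b$ is nonnegative. Combining Steps 1–3 gives the stated bound on $\mathcal{L}[S_T]$.
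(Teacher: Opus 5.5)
Your proposal is correct and follows essentially the same route as the paper's proof: the cross-entropy decomposition $\mathcal{L}[q]=H(D\mid I,C)+\mathbb{E}_{(I,C)}[\mathrm{KL}(P(\cdot\mid I,C)\,\|\,q)]$, writing both $P(D\mid I,C)$ and $S_T$ as mixtures under the same weights $P(R\mid I,C)$, joint convexity of KL (which the paper calls ``Jensen's inequality for KL divergence''), and then averaging over $(I,C)$. Your remarks on absolute continuity and Tonelli make explicit what the paper leaves implicit, and your argument correctly does without the paper's closing appeal to $P(D\mid I,R,C)=P(D\mid R,C)$, which is not needed for the bound.
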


\subsection{Proof}

\begin{proof}[Proof of Proposition 3.1]
We start by expressing the log-loss risk of an arbitrary predictor $q(D \mid I,C)$ that depends only on $(I,C)$.
By definition,
\begin{align}
\mathcal{L}[q]
&= \mathbb{E}_{(I,C,D) \sim \mathcal{P}}[-\log q(D \mid I,C)] \\
&= \mathbb{E}_{(I,C)} \mathbb{E}_{D \mid I,C}[-\log q(D \mid I,C)] \\
&= \mathbb{E}_{(I,C)} \big[ H(P(D \mid I,C), q(\cdot \mid I,C)) \big],
\end{align}
where $H(P(D \mid I,C), q(\cdot \mid I,C))$ denotes the cross-entropy between the true conditional $P(D \mid I,C)$ and the predictor $q(\cdot \mid I,C)$. Applying cross-entropy decomposition pointwise in $(I,C)$, we obtain:
\begin{align}
\mathcal{L}[q]
&= \mathbb{E}_{(I,C)} \Big[ H(D \mid I,C) 
     + \mathrm{KL}\big(P(D \mid I,C) \,\|\, q(\cdot \mid I,C)\big) \Big] \\
&= H(D \mid I,C) 
   + \mathbb{E}_{(I,C)} \Big[\mathrm{KL}\big(P(D \mid I,C) \,\|\, q(\cdot \mid I,C)\big)\Big].
\label{eq:risk-decomp}
\end{align}
We now specialize to the marginalized predictor
\[
S_T(D \mid I,C)
= \mathbb{E}_{R \mid I,C} \big[ T(D \mid I,R,C) \big].
\]
To bound $\mathbb{E}_{(I,C)} [\mathrm{KL}(P(D | I,C) \,\|\, S_T(\cdot \mid I,C))]$, we express both $P(D \mid I,C)$ and $S_T(D \mid I,C)$ as mixtures over $R$. By the law of total expectation:
$$
P(D\mid I,C)
=\mathbb{E}_{R\mid I,C}[P(D\mid I R,C)],
$$
$$
S_T(D\mid I,C)
=\mathbb{E}_{R\mid I,C}[T(D\mid I,R,C)].
$$
Applying Jensen's inequality for KL divergence, we have:
\begin{align}
&\mathrm{KL}(P(D\mid I,C) \,\|\, S_T(\cdot\mid I,C))
\;\le \\
&\mathbb{E}_{R\mid I,C}\big[\mathrm{KL}(P(D\mid I,R,C)\,\|\,T(\cdot\mid I,R,C))\big].
\end{align}
Taking expectations over $(I,C)$ on both sides, we obtain
\begin{align}
&\mathbb{E}_{(I,C)}\Big[ \mathrm{KL}\big(P(D \mid I,C) \,\|\, S_T(\cdot \mid I,C)\big)\Big]
\\\leq &\mathbb{E}_{(I,C)} \mathbb{E}_{R \mid I,C} 
     \Big[ \mathrm{KL}\big(P(D \mid I, R,C) \,\|\, T(\cdot \mid I,R,C)\big) \Big] \\
= &\mathbb{E}_{(I,R,C)} 
     \Big[ \mathrm{KL}\big(P(D \mid I, R,C) \,\|\, T(\cdot \mid I,R,C)\big) \Big].
\label{eq:kl-bound}
\end{align}

Finally, substituting $q = S_T$ into the risk decomposition~\eqref{eq:risk-decomp} and using the bound~\eqref{eq:kl-bound}, we obtain
{
\small
\begin{align}
&\mathcal{L}[S_T]
= H(D \mid I,C) 
   + \mathbb{E}_{(I,C)}\big[ \mathrm{KL}\big(P(D \mid I,C) \,\|\, S_T(\cdot \mid I,C)\big)\big] \\
&\leq H(D \mid I,C) 
   + \mathbb{E}_{(I,R,C)} 
     \Big[ \mathrm{KL}\big(P(D \mid I, R,C) \,\|\, T(\cdot \mid I,R,C)\big) \Big].
\end{align}
}
Using again $P(D \mid I,R,C) = P(D \mid R,C)$, we can rewrite the bound in the equivalent form:
{
\small
\[
\mathcal{L}[S_T]
\leq H(D \mid I,C) 
+ \mathbb{E}_{(I,R,C)} 
  \Big[ \mathrm{KL}\big(P(D \mid I,R,C) \,\|\, T(\cdot \mid I,R,C)\big) \Big],
\]
}
which is exactly the claimed inequality.
\end{proof}

\subsection{Discussion}

The bound in Proposition~3.1 separates the risk into two conceptually distinct components. The term $H(D \mid I,C)$ captures the intrinsic uncertainty under the restricted information set $(I,C)$ and represents the optimal performance attainable by any predictor that lacks access to $R$ at test time. In contrast, the second term,
\[
\mathbb{E}_{(I,R,C)}\big[\mathrm{KL}\big(P(D \mid I,R,C)\,\|\,T(\cdot \mid I,R,C)\big)\big],
\]
is fully optimizable and quantifies the divergence between the true conditional law and the predictor operating in the augmented information space $(I,R,C)$. By the convexity of KL divergence, marginalizing the predictor over $R$ incurs no worse penalty than the expected KL at the full-information level, establishing that the advantage of training with transcriptomic conditioning is precisely characterized by the reducible part of this discrepancy.

\section{Implementation Details}
\subsection{Single-Cell Foundation Model Fine-tuning}
We fine-tune scGPT~\cite{hao2024large}, a transformer-based single-cell foundation model pretrained on large-scale single-cell atlases, to adapt it for perturbation response prediction on bulk-cell L1000 data. This fine-tuning enables the model to encode cell-type context (basal gene expression), intervention information (drug and dose), and perturbed transcriptomic states in a unified framework.

\subsubsection{Data Featurization}
For each cell line $C$, we use DMSO control samples to represent the unperturbed cellular state. Rather than aggregating controls, we randomly pair each perturbed sample with a DMSO control sample from the same cell line during training. This pairing strategy preserves cellular variability and prevents overfitting to averaged profiles, resulting in basal expression vectors $X_c \in \mathbb{R}^{978}$ sampled from the control distribution. For each drug perturbation, we obtain the perturbed gene expression profile $X_R \in \mathbb{R}^{978}$ from L1000 measurements. These profiles capture the post-perturbation transcriptomic state and serve as the prediction target during fine-tuning.

We obtain molecular representations for each drug $D$ from its chemical structure. Inspired by \citet{yu2025perturbnet}, we use ChemBERTa with pretrained weights as the molecule encoder to encode SMILES. The selected drug representation $\mathbf{f}_D \in \mathbb{R}^{d_f}$ is encoded through a two-layer MLP to obtain a drug embedding $\mathbf{e}_D \in \mathbb{R}^{d_e}$. We encode the perturbation dose $\delta_R$ using a learned embedding layer. Doses are encoded as continuous values normalized to $[0, 1]$. A single-layer MLP maps the dose to an embedding $\mathbf{e}_{\delta} \in \mathbb{R}^{d_\delta}$. The drug and dose embeddings are combined through concatenation to form a unified intervention embedding $\mathbf{e}\in \mathbb{R}^{d_e+d_\delta}$.

\subsubsection{scFM Encoding}
The basal gene expression $X_c$ is fed into the pretrained scGPT encoder to obtain a cell-type representation: $$Z_c = f_{\text{scFM}}(X_c) \in \mathbb{R}^{e}$$ where $e$ is the scFM embedding dimension. The basal cell embedding $Z_c$ and intervention embedding $\mathbf{e}$ are concatenated to form the complete input representation for the prediction task: $Z_{\text{combined}} = [Z_c | \mathbf{e}] \in \mathbb{R}^{e + d_e+d_\delta}$
This design allows the model to condition perturbation predictions on both the cellular context and the specific intervention (drug identity and dosage). Then we use a decoder network to predict the perturbed gene expression $\hat{X_R}$ directly from the combined encoding. The decoder is a MLP.
We minimize the mean squared error (MSE) between predicted and observed perturbed gene expression: $$\mathcal{L}_{\text{scFM}} = \frac{1}{N} \sum_{i=1}^{N} ||\hat{X_R}_i - {X_R}_i||_2^2,$$ where $N$ is the number of training samples. This direct prediction objective allows the model to learn the complete transcriptomic response to perturbations, including both the magnitude and direction of expression changes across all landmark genes.

\subsubsection{Training Details}
We fine-tune all parameters of scGPT encoder along with the decoder, allowing the foundation model to adapt to perturbation-specific patterns while retaining its general gene expression knowledge. We split drug-cell line combinations (not individual samples) into train/val/test sets to ensure evaluation on held-out drugs. 

After fine-tuning, we extract two types of embeddings for use in TIDE's teacher model: Cell-type encoding $\mathbf{h}_C = Z_c$ and Dose encoding $\mathbf{h}_{\delta_r} = \mathbf{e}_{\delta}.$ We encode raw perturbed expression directly in the teacher.

\subsection{Network Architectures}
We provide detailed architectural specifications for all components of TIDE and baseline methods. Unless otherwise specified, all dimensions are matched to the vision backbone's hidden dimension $d$.

\subsubsection{Vision Transformer Backbones}
We evaluate TIDE on two vision transformer architectures: standard Vision Transformer (ViT) and Channel-Agnostic Vision Transformer (CA-ViT). For ViT, we adopt the configurations from~\cite{dosovitskiy2020image}. ViT-Small consists of 12 transformer layers with hidden dimension 384 and 6 attention heads, while ViT-Base uses 12 layers with hidden dimension 768 and 12 attention heads. Both variants use a patch size of 16×16, learnable position embeddings, and GELU activations with LayerNorm. The MLP expansion ratio is set to 4 following standard practice. Input images are resized to 224×224 resolution. We initialize ViT encoders using random initialization.

Channel-Agnostic Vision Transformer (CA-ViT)~\cite{kraus2024masked} is specifically designed for multi-channel microscopy imaging. Unlike standard ViT which treats all channels as a single RGB-like input, CA-ViT processes each imaging channel independently through channel-specific patch embedding layers before fusing information across channels in deeper layers. This design is particularly suitable for Cell Painting and RxRx data, which contain 5-6 fluorescent channels with distinct biological meanings. We use CA-ViT-Small with the same depth and hidden dimensions as ViT-Small. The channel fusion is performed through learned aggregation after the initial patch embedding stage, allowing the model to weight different channels adaptively. Despite slightly higher parameter counts due to per-channel embeddings, CA-ViT achieves comparable or better performance than standard ViT on cellular imaging tasks under sufficient training.

\subsubsection{Chemistry-Aware Codebook Construction}
The chemistry-aware codebook transforms drug molecular representations into a structured prototype space. Given a drug's molecular representation $\mathbf{f}_D \in \mathbb{R}^{d_f}$, we apply a two-layer MLP followed by L2 normalization to project it into the same embedding space as the teacher and student encoders. We experimented with multiple molecular representation methods, including FCFP, RDKit molecular descriptors, pretrained graph isomorphism networks (GIN), and ChemBERTa embeddings. Performance comparisons are reported in Appendix D.1.  This produces the prototype matrix $V \in \mathbb{R}^{K \times d}$, where $K$ is the number of training drugs. All parameters including the prototype vectors are optimized end-to-end during training.

\subsubsection{Teacher-Student Network}
The teacher network integrates three information sources, images, transcriptomics, and cell type, to produce distributions over the codebook. The image encoder uses the vision transformer backbones described in Section B.2.1, extracting features by averaging image tokens to obtain $\mathbf{h}_I \in \mathbb{R}^d$. To incorporate dose information from images, we apply FiLM-style conditioning~\cite{perez2018film} where a small dose predictor network (two linear layers of dimensions $d \to 128 \to 1$ with ReLU) estimates the dose effect, and then modulates the image features through learned scale and shift parameters.

We encode the perturbed gene expression $X_R \in \mathbb{R}^{978}$ through a MLP, and concatenate the embedding with its dose embedding $e_\delta$. Then we use a projector to map the concatenated embedding to obtain $\mathbf{h}_R$. The cell-type encoder uses the scFM to encode basal gene expression $X_c$, producing $\mathbf{h}_C \in \mathbb{R}^d$ after projection. The fusion network concatenates the three feature vectors $[\mathbf{h}_I | \mathbf{h}_R | \mathbf{h}_C] \in \mathbb{R}^{3d}$ and processes them through a two-layer MLP with intermediate dimension $2d$. Each layer is followed by BatchNorm, with ReLU activation and dropout (rate=0.1) after the first layer. The output fused representation $\mathbf{h}_t \in \mathbb{R}^d$ is then used to compute cosine similarities with all codebook prototypes. A temperature-scaled softmax with $\tau = 0.2$ converts these scores into the teacher's probability distribution $P_t$ over the codebook.

The student network operates solely on images. Crucially, the student shares the same image encoder as the teacher, with full parameter sharing during training. This weight sharing ensures that improvements in the teacher's image understanding directly benefit the student. After extracting image features by vision encoder, we apply a projection head consisting of two linear layers. The final output is L2 normalized to produce the student representation $\mathbf{h}_s \in \mathbb{R}^d$. The student predicts codebook distributions using the same cosine similarity and temperature-scaled softmax as the teacher, but with a lower temperature $\tau_s = 0.07$ to produce sharper predictions. In practice, to accelerate training convergence, we supplement the distillation objective with direct supervision: the student's predicted distribution is also trained with cross-entropy loss against ground-truth drug labels. This auxiliary supervision provides a stronger learning signal in early training stages before the teacher has learned meaningful soft targets, and helps stabilize optimization throughout training.

\subsubsection{Self-Supervised Learning Components}
For DINO~\cite{caron2021emerging}, we use the standard multi-crop strategy with 2 global crops at 224×224 resolution and 4 local crops at 96×96 resolution. The teacher network uses exponential moving average (EMA) updates with momentum 0.996. Temperature parameters are set to 0.07 for the teacher and 0.1 for the student, with centering momentum 0.9. For MAE~\cite{he2022masked}, we use the same encoder backbones with a lightweight 8-layer decoder (hidden dimension 512) and mask ratio 0.75. Both SSL methods use the standard patch size of 16×16.

\subsubsection{Baseline Method Architectures}
For CL(D), which aligns images to drug molecular representations, we use the same vision backbones as TIDE for the image encoder. To maintain consistency with the drug encoder used in our scFM fine-tuning (Appendix B.1), we employ ChemBERTa as the drug encoder. ChemBERTa provides pretrained contextualized representations of molecular SMILES strings, which we extract from the [CLS] token. Both image and drug features are projected to 256-dimensional embeddings through separate projection heads before computing InfoNCE loss with temperature 0.07. CL(R) aligns images to RNA-seq profiles using contrastive learning. The image encoder and projection head are identical to CL(D). For RNA encoding, contrastive learning typically requires large batch sizes to provide sufficient negative samples, which poses GPU memory constraints when using large foundation models like scGPT. To enable efficient training with batch sizes of 512, we instead use a lightweight two-layer MLP that maps the 978-dimensional L1000 profiles to dimension $d$. The encoded RNA features are then projected to 128 dimensions using the same projection head as images before computing the InfoNCE loss with temperature 0.07. For hybrid variants combining contrastive learning with DINO (CL(D)+DINO, CL(R)+DINO), we add the DINO loss with weight 0.2 to the contrastive loss.

\begin{figure*}[tbp]
    \centering
    \begin{subfigure}[b]{0.245\textwidth}
        \centering
        \includegraphics[width=\linewidth]{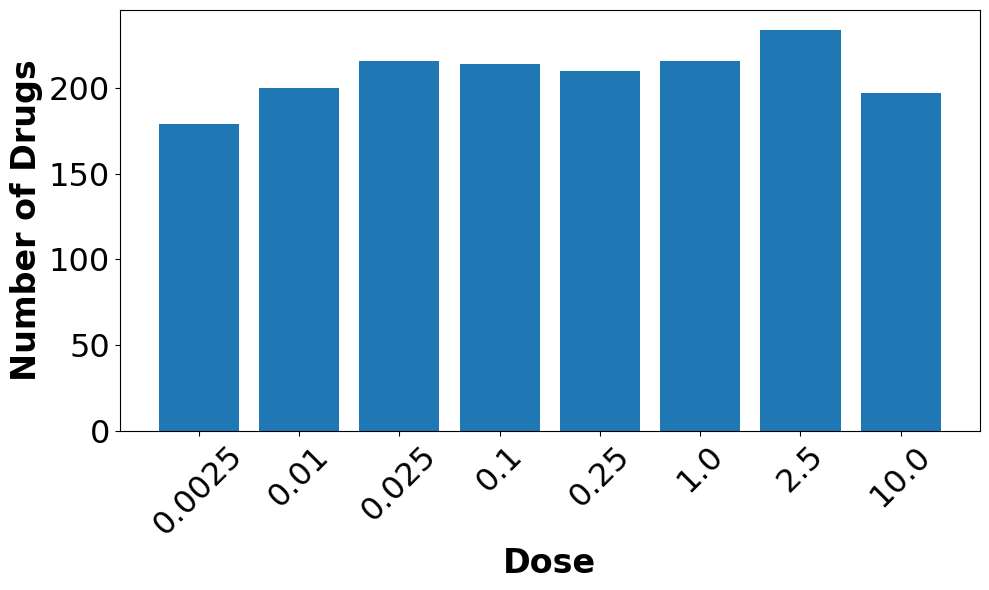}
        \caption{RxRx3}
        \label{fig:sub1}
    \end{subfigure}
    \begin{subfigure}[b]{0.245\textwidth}
        \centering
        \includegraphics[width=\linewidth]{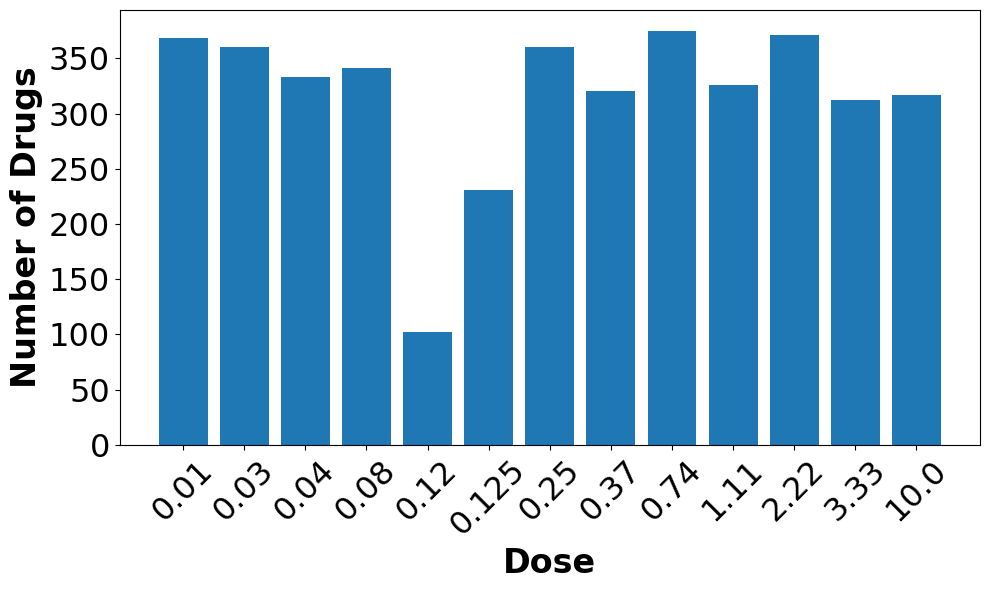}
        \caption{HUVEC}
        \label{fig:sub2}
    \end{subfigure}
    \begin{subfigure}[b]{0.245\textwidth}
        \centering
        \includegraphics[width=\linewidth]{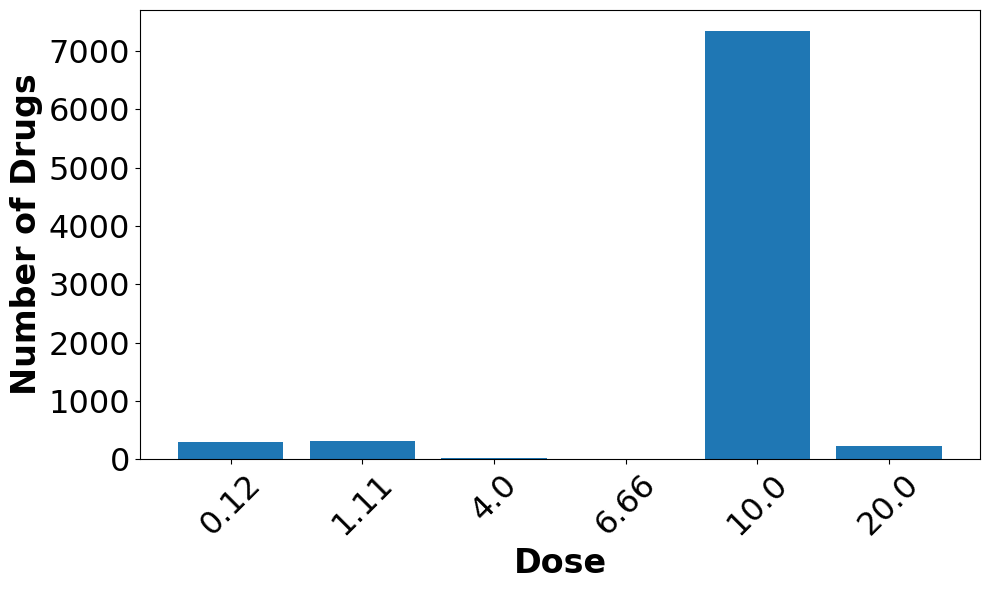}
        \caption{U2OS}
        \label{fig:sub3}
    \end{subfigure}
    \begin{subfigure}[b]{0.245\textwidth}
        \centering
        \includegraphics[width=\linewidth]{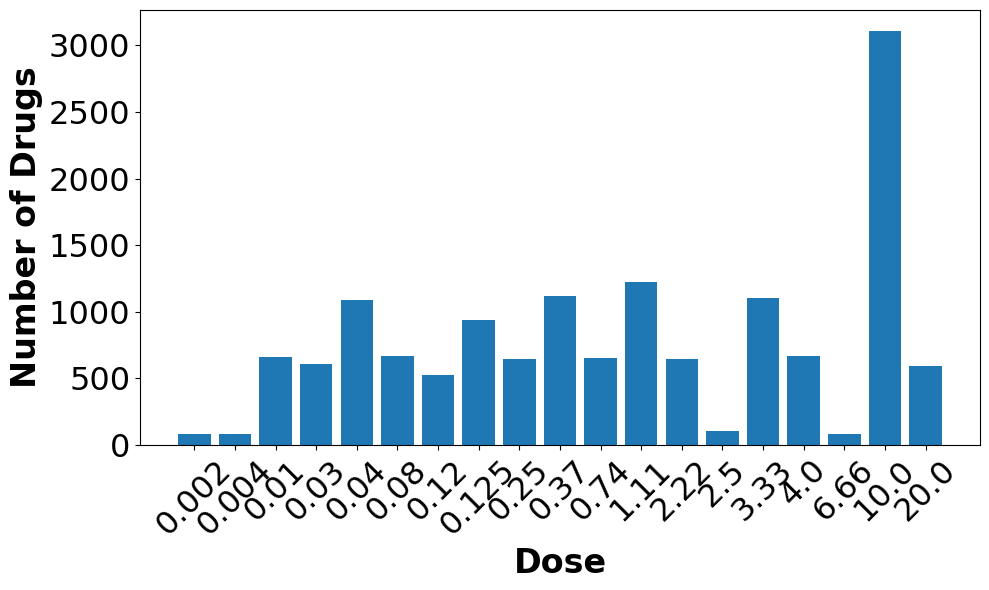}
        \caption{A549}
        \label{fig:sub4}
    \end{subfigure}
    \caption{Dose ($\mu$M) distribution across imaging and transcriptomic datasets. (a) Dose distribution in RxRx3 imaging data. (b-d) Dose distributions in L1000 transcriptomic profiles for HUVEC, U2OS, and A549 cell lines.}
    \label{fig:four-in-a-row}
\end{figure*}
\section{Dataset Details}
\subsection{Data Preprocessing}

CPG-Pilot and CPG-12 use the standardized Cell Painting protocol with five fluorescent channels targeting different cellular compartment. They provide 16-bit multi-channel TIFF images\footnote{https://github.com/broadinstitute/cellpainting-gallery}. RxRx3 uses a six-channel imaging protocol on HUVEC cells\footnote{https://www.rxrx.ai/rxrx3-core}. For Cell Painting datasets (CPG-Pilot and CPG-12), we apply flatfield illumination correction to compensate for uneven lighting and optical aberrations across the field of view. This preprocessing step ensures uniform intensity distributions and removes spatial artifacts that could confound biological signals. To handle intensity variations and remove outliers, we clip extreme pixel values at the 0.01\% and 99.9\% percentiles per channel before rescaling intensities to the [0, 1] range. Images are resized to 224×224 resolution to match the Vision Transformer input requirements.

L1000 data is obtained from the LINCS database in GCTX (Gene Expression Connectivity) format, containing expression measurements for 978 landmark genes across multiple cell lines and perturbations. Each profile represents a bulk-cell population average from treated wells. We apply minimal filtering to L1000 profiles for quality control. The primary criterion is dose-level sample size: we remove dose conditions with fewer than 5 replicate samples to ensure stable statistics.

\subsection{Details of Drug-Target Discovery}
For the unsupervised drug-target gene discovery task, we follow the standard preprocessing protocol established by~\citet{kraus2025rxrx3}. This protocol consists of two key steps: crop aggregation and batch alignment. For RxRx3, each 512×512×6 microscopy image produces four tiled 256×256×6 crops. We compute embeddings for each crop independently using the trained image encoder, then mean-aggregate embeddings from the same well to obtain a single well-level representation. This aggregation reduces within-well variance and provides more stable perturbation signatures. To align embeddings across experimental batches and remove systematic technical variation, we employ Principal Component Analysis with Centering and Scaling (PCA-CS). First, we fit a PCA transformation on all DMSO control sample embeddings without dimensionality reduction, capturing the control distribution's variance structure. This PCA transformation is then applied to all embeddings (both controls and perturbed samples). Next, for each experimental batch, embeddings are centered and scaled relative to the batch's DMSO control statistics. This procedure standardizes embeddings and minimizes batch effects while preserving biological perturbation signals. Following alignment, we compute drug-target association metrics on the batch-corrected embeddings.

\subsection{Dose Distribution}
\autoref{fig:four-in-a-row} presents the dose distributions across imaging and L1000 transcriptomic datasets. The distributions reveal substantial heterogeneity both within and across modalities, underscoring the importance of explicit dose modeling in our framework.
For RxRx3 imaging data (\autoref{fig:sub1}), doses are relatively uniformly distributed across the range from 0.0025 $\mu$M to 10.0 $\mu$M, with each dose bin containing 150-250 drug perturbations. This balanced coverage enables evaluation across different perturbation strengths. In contrast, the L1000 transcriptomic profiles show markedly different dose distributions across cell lines. HUVEC exhibits diverse dose coverage spanning 0.01 $\mu$M to 30.0 $\mu$M, with notable peaks around 0.12 $\mu$M, 0.37 $\mu$M, and multiple higher doses. U2OS is heavily concentrated at 10.0 $\mu$M, with over 6000 drugs at this single dose and sparse coverage elsewhere, reflecting a standardized screening dosage for this cell line. A549  shows the most dispersed distribution, with substantial measurements across 0.003-30.0 $\mu$M and a pronounced peak at 30.0 $\mu$M containing over 3000 drugs.
These heterogeneous dose distributions have critical implications for multimodal learning. First, when pairing imaging and transcriptomic data, exact dose matches are rare. For instance, RxRx3's peak doses (e.g., 1.0 $\mu$M) may not align with HUVEC's common doses (e.g., 0.12 $\mu$M or 10.0 $\mu$M), creating systematic dose mismatches. Second, the dose-dependent nature of drug effects means that treating different doses of the same compound as equivalent, as in identity-based alignment methods, discards critical information about perturbation magnitude. While our framework addresses this challenge through explicit dose conditioning in the teacher module. The teacher encodes dose information separately for images ($\delta_I$) and transcriptomics ($\delta_R$), allowing it to reason about dose-specific effects even when doses differ across modalities. The scFM fine-tuning further enables disentanglement of dose-dependent transcriptomic responses from cell-type identity. This design is particularly important for HUVEC and A549, where the diverse dose distributions mean that most paired samples have dose mismatches. 

\section{Additional Experimental Results}
In this section, we provide additional experimental results that complement the main paper. We present ablation studies on codebook design choices, detailed results and analysis of dose-dependent effects, and validation of fine-tuned scFM.

\begin{table}[t]
    \centering
    \begin{tabular}{l|cc|cc}
    \toprule
     Dataset    & \multicolumn{2}{c|}{RxRx3} & \multicolumn{2}{c}{CPG-12}\\
     \midrule
     Metric & Top-1 & Top-5 &  Top-1 & Top-5 \\ 
     \midrule
     None & 5.20 & 15.82 & 39.39 & 65.06 \\
     GIN & 5.72 & 16.28 &41.26 & 69.03\\
     RDKit & 5.57 & 16.35 & 40.09 & 65.10\\
     ChemBERTa & 5.66 & 16.80 & 41.00 & 68.88\\
     FCFP  & {5.62} & {16.86} & {42.09} & {71.07}\\
    Fixed FCFP &5.11 & 15.71 & 40.13 & 66.56\\
     \bottomrule
    \end{tabular}
    \caption{Performance with different drug molecular representations on CPG-12.}

    \label{tab:codebook}
\end{table}

\subsection{Ablation Studies on Drug Codebook}
We investigate key design choices for the chemistry-aware codebook to understand which components contribute to TIDE's performance. \autoref{tab:codebook} presents results on RxRx3 and CPG-12 with different codebook configurations using ViT as the vision encoder.

The ``Non'' baseline directly predicts drug identity through a classification loss without using a codebook structure. Compared to this baseline, all codebook-based variants show improved performance, validating that organizing drugs in a prototype space facilitates soft similarity-based learning.
We test four molecular representation methods, GIN, RDKit descriptors, ChemBERTa, and FCFP, each paired with a learnable MLP projection and end-to-end optimization. Performance across these representations is relatively comparable, with all achieving meaningful improvements over the ``Non'' baseline. This suggests that TIDE's framework is robust to the choice of initial molecular encoding, as the learnable codebook adapts representations during training to encode mechanistic similarity beyond pure chemical structure. The comparison between learned FCFP and fixed FCFP codebook is also informative. The fixed variant, where prototypes are frozen at their ECFP initialization, performs substantially worse than the learned version, and even falls below some other learned representations. This validates our design that the codebook must be optimized end-to-end to move beyond chemical similarity and encode the mechanistic relationships learned from transcriptomic guidance. The fixed codebook cannot capture pathway-level similarities, confirming that learning is essential rather than the framework simply exploiting chemical shortcuts. Overall, FCFP with learnable projection achieves the best performance across both datasets. Based on these results, we adopt FCFP as the molecular representation for all experiments in main paper.

\begin{figure}[t]
    \centering
    \includegraphics[width=0.95\linewidth]{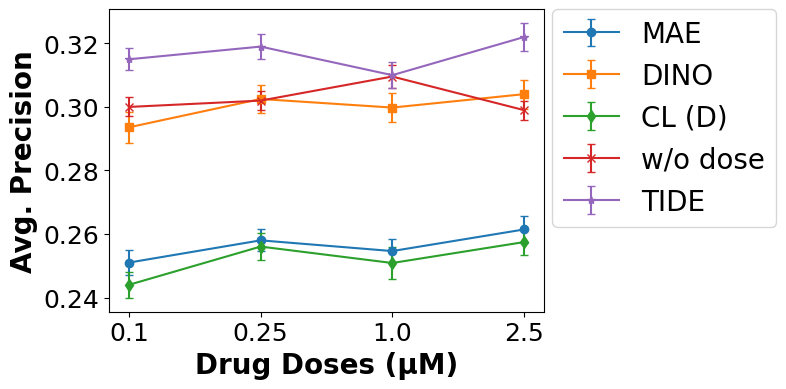}
    \caption{Drug-target discovery performance across dose levels on RxRx3. Average Precision (AP) for drug-target gene retrieval at four curated dose levels using ViT backbone.}
    \label{fig:doses}
\end{figure}
\subsection{Dose-Specific Performance and Ablation}
To assess how perturbation dose affects drug-target discovery performance and validate the necessity of explicit dose conditioning, we analyze results on RxRx3 stratified by dose level. Based on the original RxRx3-core annotations, we curate four dose levels with significant drug-target associations: 0.1, 0.25, 1.0, and 2.5 $\mu$M. \autoref{fig:doses} presents Average Precision (AP) across these doses for multiple methods using ViT backbone. Performance trends are consistent across doses for all methods, with 2.5 $\mu$M yielding the relatively highest AP values. The ``w/o dose'' ablation removes dose conditioning from both the scFM encoder and the teacher's image dose modeling, treating all doses of the same drug equivalently. We can find that its performance drops substantially across all dose levels. This validates that explicit dose disentanglement is critical for leveraging the heterogeneous dose distributions in weakly paired data, enabling the model to reason about dose-specific effects rather than collapsing them through identity-based matching.

\subsection{scFM Fine-tuning Results}
To validate the quality of our fine-tuned scFM encoder, 
we report its performance on perturbation response prediction. The model is trained to predict perturbed gene expression given basal expression, drug representation, and dose. As shown in \autoref{tab:scfm}, the fine-tuned scFM achieves $R^2$ between 0.70-0.84 across cell lines, demonstrating strong predictive performance on novel compounds.  These results validate that the scFM successfully learns to encode cell-type identity from basal gene expression, capture dose-dependent perturbation magnitude, and  predict pathway-level transcriptomic changes. This learned capability is essential for TIDE's teacher model, which relies on the scFM to provide mechanism-aware encodings of cell context, dose information, and perturbed RNA states during distillation.

\begin{table}[t]
     \resizebox{0.48\textwidth}{!}{
         \centering
    \begin{tabular}{l|cc|cc|cc}
    \toprule
     Dataset & \multicolumn{2}{c}{RxRx3} & \multicolumn{2}{c}{CPG-Pilot } & \multicolumn{2}{c}{CPG-12}  \\
     Metric $\uparrow$  & $R^2$ & LogFC &$R^2$ & LogFC & $R^2$ & LogFC \\
     \midrule
     Fine-tuned &0.835 & 0.622& 0.815 & 0.572 & 0.699 &0.850 \\
    \bottomrule
    \end{tabular}}
    \caption{scFM fine-tuning performance on L1000 perturbation response prediction. Each column corresponds to a scFM model trained on L1000 data matching the cell lines used in the imaging dataset: RxRx3 (HUVEC), CPG-Pilot (U2OS+A549), and CPG-12 (U2OS). }
    \vspace{-12pt}
    \label{tab:scfm}
\end{table}

\section{Background on L1000 and Cell Morphology}
L1000 is a high-throughput gene expression profiling technology developed by the Broad Institute as part of the Library of Integrated Network-Based Cellular Signatures (LINCS) program~\cite{subramanian2017next}. Different from traditional RNA-seq which measures the entire transcriptome, L1000 uses a cost-effective approach that directly measures expression of 978 carefully selected ``landmark'' genes using a bead-based detection system. These landmark genes are chosen to be highly informative: their expression patterns can be used to computationally infer the expression of the remaining around 80\% of protein-coding genes through statistical models trained on large-scale gene expression databases. Key characteristics of L1000 data include: (i) bulk-cell measurements that represent population-averaged gene expression from thousands of cells per well, smoothing out single-cell heterogeneity; (ii) pathway-level resolution capturing coordinated changes in gene programs rather than individual gene fluctuations; (iii) dose-dependent responses where the same perturbation at different concentrations produces graded transcriptomic changes; and (iv) cell-type specificity where identical perturbations can elicit distinct transcriptomic profiles across different cell lines due to varying basal pathway activity and regulatory networks.

Cell Painting~\cite{bray2016cell} is a standardized multiplexed fluorescence microscopy protocol designed to capture rich morphological information about cellular state. The protocol uses six fluorescent dyes that target distinct cellular components: Hoechst 33342 (nucleus/DNA), concanavalin A/phalloidin (endoplasmic reticulum and cytoskeleton), SYTO 14 (nucleoli and cytoplasmic RNA), wheat germ agglutinin (Golgi and plasma membrane), MitoTracker Deep Red (mitochondria), and phalloidin (F-actin). By imaging these five channels (some dyes are combined) at high resolution, Cell Painting generates multi-channel images that reveal detailed morphological features including cell shape, size, texture, organelle distribution, and sub-cellular compartmentalization~\cite{chandrasekaran2024three}. Cell Painting data is characterized by: (i) scalability to millions of cells and thousands of perturbations per experiment at relatively low cost; (ii) phenotypic richness capturing morphological changes invisible to the naked eye but indicative of cellular state; (iii) perturbation sensitivity where subtle compound effects manifest as quantifiable morphological shifts.

While both technologies profile cellular responses to perturbations, they capture fundamentally different aspects of cellular state. L1000 provides direct molecular evidence of pathway activation and gene regulatory changes, making it well-suited for identifying mechanisms of action (MoA) at the molecular level. Cell Painting captures integrated phenotypic outcomes, which can reveal subtle perturbation effects and distinguish compounds with similar molecular targets but different cellular contexts. This complementarity has been quantitatively demonstrated in MoA prediction studies. For instance, \citet{way2022morphology} compared single-modality versus multi-modal profiling, and found that combining Cell Painting and L1000 enables detection of approximately 69\% of mechanism categories, whereas using Cell Painting images or L1000 alone detects only 19\% or 24\% of mechanisms, respectively. This synergy arises because some MoAs produce strong transcriptomic signatures but subtle morphological changes, while others have pronounced phenotypic effects with modest gene expression alterations. The demonstrated complementarity has motivated extensive research in multi-modal learning approaches published at recent machine learning and computer vision conferences. Contrastive alignment methods \cite{sanchez2023cloome,zheng2024cross,liu2024learning} are widely adopted: models independently encode images and gene expression, then align representations through contrastive losses that pull together embeddings from the same perturbation while pushing apart different perturbations. Beyond the standard contrastive learning, InfoCORE~\cite{wang2023removing} addresses batch effect confounders in contrastive multi-modal learning. While MINER~\cite{rao2025multi} extends this direction by incorporating hard negative mining strategies to improve cross-modal alignment robustness. Another line of works focuses on cross-modal knowledge transfer, which aims to enhance single-modality representations through distillation. ~\citet{bendidi2025cross} proposed using Cell Painting images as a ``teache'' to guide transcriptomic ``student'' encoders via knowledge distillation, leveraging abundant imaging data to improve gene expression representations. Conversely, other work \cite{navidi2024morphodiff,zhang2025cellflux} explores generative modeling: given perturbed transcriptomic profiles, diffusion models synthesize corresponding cellular morphology images, enabling data augmentation and mechanistic visualization.

These prior works generally focus on either learning joint representations for downstream molecular tasks, with images serving as auxiliary signals, or improving transcriptomic or drug representations through image-based regularization. Our work differs by targeting image representation learning as the primary objective, using transcriptomics as privileged information during training to guide image encoders toward mechanistic understanding, enabling purely image-based drug discovery at deployment while benefiting from transcriptomic knowledge.


\end{document}